\title{InfoMAE: Pair-Efficient Cross-Modal Alignment for Multimodal Time-Series Sensing Signals}
\author{Tomoyoshi Kimura}
\email{tkimura4@illinois.edu}
\affiliation{%
  \institution{University of Illinois Urbana-Champaign}
  \city{Urbana}
  \state{IL}
  \country{USA}
}
\author{Xinlin Li}
\author{Osama Hanna}
\email{xinlinli@g.ucla.edu}
\email{ohanna@ucla.edu}
\affiliation{%
  \institution{University of California, Los Angeles}
  \city{Los Angeles}
  \state{CA}
  \country{USA}
}
\author{Yatong Chen}
\email{chenyatong@sjtu.edu.cn}
\affiliation{%
    \institution{Shanghai Jiao Tong University}
    \city{Shanghai}
    \country{China}
}
\author{Yizhuo Chen}
\author{Denizhan Kara}
\email{yizhuoc@illinois.edu}
\email{kara4@illinois.edu}
\affiliation{%
  \institution{University of Illinois Urbana-Champaign}
  \city{Urbana}
  \state{IL}
  \country{USA}
}
\author{Tianshi Wang}
\author{Jinyang Li}
\email{tianshi3@illinois.edu}
\email{jinyang7@illinois.edu}
\affiliation{%
  \institution{University of Illinois Urbana-Champaign}
  \city{Urbana}
  \country{USA}
}
\author{Xiaomin Ouyang}
\email{xmouyang@cse.ust.hk}
\affiliation{%
  \institution{Hong Kong University of Science and Technology}
  \city{Hong Kong SAR}
  \country{China}
}
\author{Shengzhong Liu}
\email{shengzhong@sjtu.edu.cn}
\affiliation{%
  \institution{Shanghai Jiao Tong University}
  \city{Shanghai}
  \country{China}
}
\author{Mani Srivastava}
\author{Suhas Diggavi}
\email{mbs@ee.ucla.edu}
\email{suhas@ee.ucla.edu}
\affiliation{%
  \institution{University of California, Los Angeles}
  \city{Los Angeles}
  \state{CA}
  \country{USA}
}
\author{Tarek Abdelzaher}
\email{zaher@illinois.edu}
\affiliation{%
  \institution{University of Illinois Urbana-Champaign}
  \city{Urbana}
  \state{IL}
  \country{USA}
}
\useunder{\uline}{\ul}{}
\def \eg {\emph{e.g.}, }
\def \et {\emph{et al.}}
\newcommand{\indep}{\perp \!\!\! \perp}
\newcommand{\model}{{InfoMAE}\xspace}
\newcommand{\paradigm}{{InfoMAE}\xspace}
\newcommand{\paradigmabbrv}{{InfoMAE}\xspace}
\newcommand{\xmark}{\ding{55}}%
\keywords{Multimodal sensing, Self-supervised learning, Internet of Things} 
\begin{document}
\begin{abstract}
Standard multimodal self-supervised learning (SSL) algorithms regard cross-modal synchronization as implicit supervisory labels during pretraining, thus posing high requirements on the scale and quality of multimodal samples.
These constraints significantly limit the performance of sensing intelligence in IoT applications, as the heterogeneity and the non-interpretability of time-series signals result in abundant unimodal data but scarce high-quality multimodal pairs. 
This paper proposes \model, a cross-modal alignment framework that tackles the challenge of multimodal pair efficiency under the SSL setting by facilitating efficient cross-modal alignment of pretrained unimodal representations. 
\model achieves \textit{efficient cross-modal alignment} with \textit{limited data pairs} through a novel information theory-inspired formulation that simultaneously addresses distribution-level and instance-level alignment.
Extensive experiments on two real-world IoT applications are performed to evaluate \model's pairing efficiency to bridge pretrained unimodal models into a cohesive joint multimodal model. 
\model enhances downstream multimodal tasks by over 60\% with significantly improved multimodal pairing efficiency. It also improves unimodal task accuracy by an average of 22\%
\footnote{The code is available at https://github.com/tomoyoshki/InfoMAE.}.
\end{abstract}
\maketitle

\section{Introduction}

Multimodal Self-Supervised Learning (SSL) algorithms, although achieving unprecedented performance in extensive sensing applications~\cite{deldari2022cocoa, delpreto2022actionsense, piechocki2023multimodal, kimura2024vibrofm}, present unique data challenges rarely encountered with unimodal SSL or vision-language domains due to the complexity in acquiring high-quality multimodal pairs for IoT signals.
The inherent properties of sensory data common in Web and Industrial sensing applications result in abundant unimodal signals but scarce multimodal pairs.
First, sensory modalities have heterogeneous properties, such as sampling rate, timestamp, or duration, that increase the likelihood of capturing asynchronous events. 
For example, in vibration sensing applications (machine monitoring, vehicle detection), multimodal sensors (geophone, microphone, thermometer, etc.) often operate at different sampling rates, leading to temporal misalignments that require manual calibration~\cite{liu2024focal}.
Second, raw signals often lack intuitive interpretability. 
Unlike images or text, where visual features can be easily matched to textual captions, capturing useful signatures between sensing modalities like motion or frequency waves is challenging.
Preprocessing and calibrating these signals requires modality-specific domain knowledge, which is labor-intensive and susceptible to operational errors.
Finally, sensors for IoT are subject to varying deployment conditions, leading to sparse and noisy data~\cite{li2024acies}.
For example, in human activity recognition (HAR) applications, wearable IMU sensors generate multimodal motion streams for real-time monitoring, fitness tracking, or healthcare purposes. Each modality can be independently affected by device constraints, platform heterogeneity, sensor failures, or variations in deployment environments, leading to missing or incomplete data streams.
This heterogeneity often yields poor-quality uncorrelated multimodal pairs or incomplete datasets with significant gaps and missing data.
As IoT networks scale in quantity and the number of modalities, acquiring large-scale, high-quality multimodal pairs becomes increasingly time-consuming, error-prone, and less reliable.

Despite these challenges, most existing multimodal SSL frameworks~\cite{alwassel2020self, korbar2018cooperative, morgado2020learning, radford2021learning} rely heavily on massive multimodal pairs to learn robust joint representations during the pretraining, but their capability could degrade significantly with insufficient synchronized pairs~\cite{ma2022multimodal, wang2023connecting} or uninformative false-positive pairs~\cite{chuang2022robust, morgado2021robust}
On the other hand, independently pretraining each modality on their unimodal data and directly concatenating misaligned modality features for finetuning fails to capture cross-modal interactions that are critical to downstream multimodal tasks~\cite{huang2021learning, verma2019learning}. 
\begin{figure}[t!]
\centering
\includegraphics[width=0.95\columnwidth]{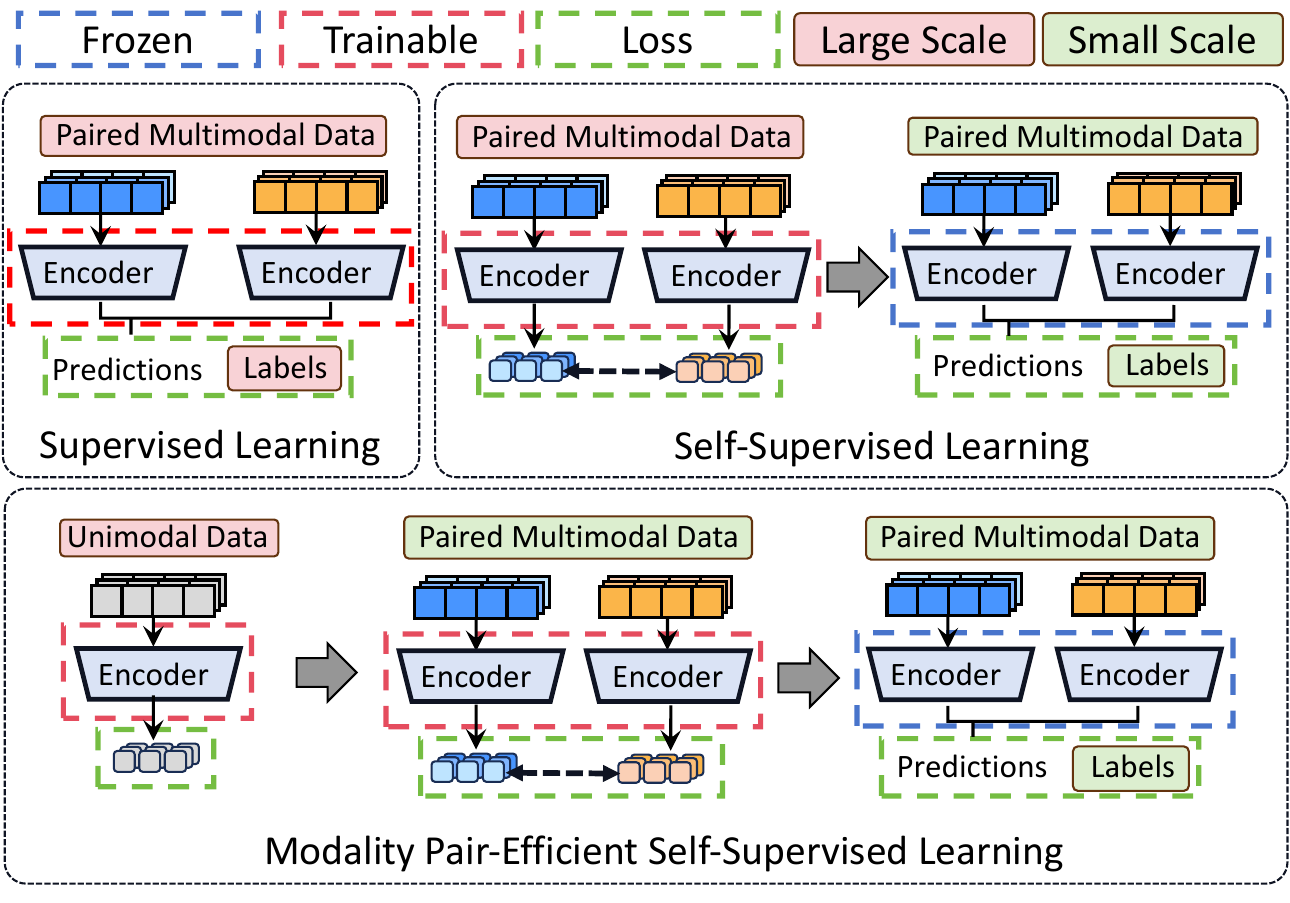}
\vspace{-0.4cm}
\caption{
Comparison of supervised learning, self-supervised learning, and pair-efficient self-supervised learning.
}
\label{fig:learning_paradigm}
\end{figure} 
Instead, we observe that with limited multimodal pairs, we can effectively convert independently trained unimodal encoders into a coherent model that sustains strong generalizability in multimodal tasks. We refer to this process as \textit{pair-efficient SSL}.
The relation of pair-efficient SSL for multimodal data compared to standard SSL draws an analogy to the evolvement of SSL compared to supervised learning, as visualized in Figure \ref{fig:learning_paradigm}. 
In supervised learning, manual labels serve as supervision to train encoders for mapping inputs to task-specific labels.
Its performance depends heavily on the quantity and quality of human annotations. 
Self-supervised learning (SSL) mitigates label scarcity by first designating proxy labels from the data properties to learn general semantics with massive unlabeled data, then calibrating the pretrained model to a downstream task with minimal human annotations.
Similarly, in multimodal SSL contexts, cross-modal alignment acts as a special form of ``supervision'', where point-to-point modality correspondence is utilized to identify semantically meaningful and consistent sensory information. 
Taking another step forward, pair-efficient SSL takes advantage of abundant unimodal data for ``independent pretraining'', followed by ``cross-modal finetuning'' with limited multimodal pairs to align unimodal models into a cohesive multimodal model.

In this paper, we propose InfoMAE, a cross-modal learning framework designed to enhance the alignment of unimodal representations using a limited number of multimodal pairs. 
The key idea behind InfoMAE is to enforce alignment across modalities at both the \textit{distribution} and \textit{instance} levels.
Existing contrastive learning frameworks adopt point-to-point alignment to map samples across different modalities to a proximate joint representation~\cite{tian2020contrastive, liu2024focal, ouyang2022cosmo, poklukar2022gmc}. 
These approaches focus on aligning individual samples, essentially viewing alignment as a local optimization problem that aims to minimize the geometric distances between corresponding samples in the representation space.
However, such instance-level approaches face significant challenges with limited multimodal pairs, as they may overfit to the specific pairs available and result in poor generalization with pairing biases.
These hinder capturing complex cross-modal relationships, especially when the multimodal pairs are sparse and unevenly distributed.
In contrast, InfoMAE takes a more holistic approach by emphasizing \textit{distribution-level} alignment, considering the overall information content of the limited multimodal pairs rather than only focusing on the individual samples.
We present a comprehensive analysis of distribution alignment and propose an \textit{information theory-based approach} to formally define the distribution alignment problem in the factorized information space. 
We formulate this as a differential learning objective to construct (i) shared joint representations as a compact common variable across modalities capable of performing any multimodal task and (ii) private representations holding implicit modality-specific information independent of shared representations. 
\model alleviates the strict requirement of exact multimodal sample pairs and can better accommodate potential misalignments in data collection or temporal synchronization, improving the representations learned even with a small-scale multimodal pair. 

We extensively evaluate \model across various combinations of pretrained unimodal domains. 
\model achieves exceptional performance gain compared to the standard multimodal SSL paradigm under limited multimodal pairs and outperforms existing works when aligning the unimodal representations.
Individual unimodal encoders, in return, can also benefit from the representational structures with improved downstream performance.
Additionally, as the number of multimodal pairs scale, \model also demonstrates versatility as a standard multimodal SSL framework, achieving SOTA performance across real-world IoT applications.

\section{Analysis of Cross-Modal Alignment}\label{sec:formulation}

\subsection{Notation}
Consider $M$ sets of unsynchronized sensory modality data $\mathcal{X} = {\{X_i\}}_{i \in M}$, where each set $X_{i}$ contains unlabeled samples of fixed-length windows partitioned from the time-series signals of the $i$-th sensory modality. Let $N_i = |X_i|$ denote the size of each set.

For the $j$-th sample of modality set $i$, we apply Short-Time Fourier Transform (STFT) to obtain its time-frequency representation, $\mathbf{x}_{ij} \in \mathbb{R}^{C_i \times I\times S_i}$, where $C_i$ is the number of input channels, $I$ is the number of time intervals within a sample window, and $S_i$ is the spectrum length in the frequency domain.
We have a set of modality encoders $\mathcal{E} = \{E_1, E_2, \dots, E_M\}$ to extract the modality embeddings of each sample and a set of modality decoders $\mathcal{D} = \{D_1, D_2, \dots, D_M\}$ to map the samples from the embedding space back to the time-frequency domain $\mathcal{\hat{X}} = {\{ \hat{X_i} \}}_{i \in M}$ as a part of the reconstruction process.
Additionally, there is a set of multimodal data $\mathcal{X}^s = \{X^s_i\}_{i \in M^s}$ consisting of a subset of modalities $M^s \subseteq \hat{M}$, where samples across the modalities are synchronized in time and have equal sizes $|X_1^s| =\cdots= |X^s_{M^s}|$.
Note that each synchronized data of modality $i$ can also be a subset of the unsynchronized unimodal set such that $X^s_i \subseteq X_i$, as any synchronized multimodal data is inherently unsynchronized when considered independently.
Finally, we have a set of labeled data for supervised learning and finetuning on a much smaller scale, where each sample has a corresponding label $y_j$ for each downstream task.

\subsection{Problem Definition}
Prior multimodal SSL practices rely on large-scale, fully synchronized multimodal sets $\mathcal{X}^s$ to learn joint multimodal representations for downstream tasks.
However, these approaches overlook two challenges:
(i) \textit{Insufficient multimodal data}: When $|\mathcal{X}^s|$ is small, existing methods struggle to learn effective joint representations, and (ii) \textit{Unutilized unimodal data}: The abundance of unimodal data is often ignored.
In IoT applications, synchronized multimodal sets are limited due to signal heterogeneities, temporal misalignment, or domain variances, leading to incomplete modalities.
This results in limited synchronized multimodal data compared to unimodal data ($|X^s_i| \le |X_i|$). 
To better leverage unimodal data, our problem falls under the SSL setting with unimodal pretrained models and limited multimodal pairs, consisting of two stages:

\textit{Stage 1: Independent Unimodal Pretraining.} For each independent modality data $X_i$, we train a corresponding unimodal encoder $E_i$. The goal is to learn a \textit{holistic unimodal representation} that maximizes downstream unimodal performance after finetuning.
Since modality sets $X_i$ are independent, this pretraining is not limited by the number of synchronized pairs and can, therefore, fully leverage the abundant unimodal data.

\textit{Stage 2: Efficient Cross-Modal Alignment.} Given a set of synchronized modalities data $\mathcal{X}^s$ of $M^s \subseteq M$ modalities, we aim to align the pretrained encoders efficiently. This alignment projects unimodal representations into joint representations that maximize the downstream multimodal performance after finetuning. The scale of the multimodal alignment should be significantly smaller than the unimodal pretraining $|X^s_i| \llless |X_i|$. 
In contrast to prior multimodal SSL works focusing on learning robust joint representations on large-scale multimodal data, this work aims to improve the \textit{data efficiency} of learning robust joint representations given only limited multimodal pairs.

\subsection{Factorization \& Distributional Alignment}
\begin{figure}[!t]
\centering
\includegraphics[width=0.75\columnwidth]{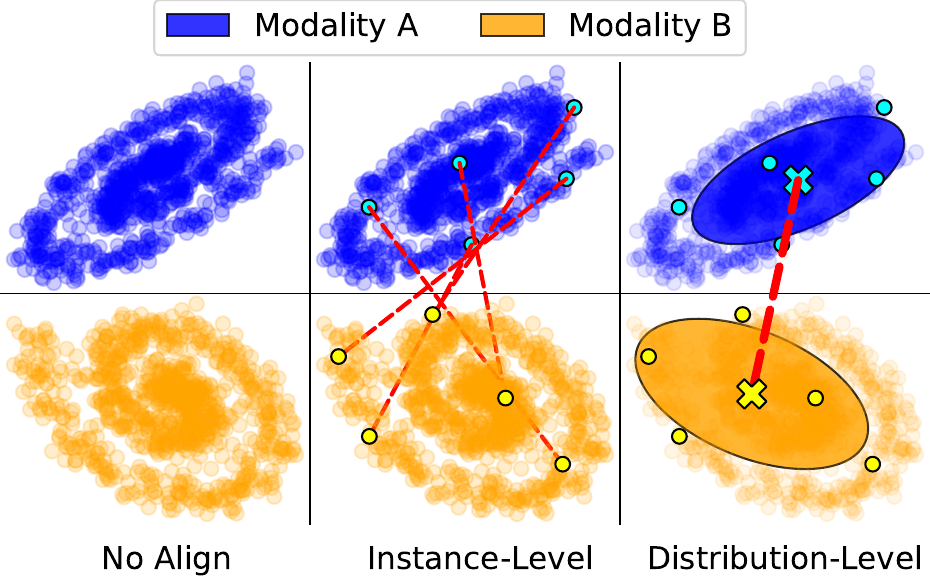}
\caption{An illustration of instance-level vs. distribution-level Cross-Modal Alignment}
\label{fig:alignment_illustration}
\end{figure}

This section analyzes multimodal representation factorization in the information space and demonstrates how it enables distribution-level alignment of unimodal representations.

\subsubsection{Connection between Factorization and Cross-modal Alignment}
In aligning multimodal representations, prior approaches often rely on contrastive learning to minimize the \textit{modality gap}~\cite{liang2022mind} by pulling representations of different modalities from the same sample closer together while pushing representations from different samples further apart.
However, due to the inherent heterogeneity, each modality contains unique, modality-specific information, and enforcing perfect alignment across modalities could potentially hurt the performance in multimodal downstream tasks~\cite{jiang2023understanding}. 
To address these challenges, recent works \cite{liu2024focal, jiang2023understanding, liang2024factorized} have proposed factorizing modality representations into shared and private subspaces.
It preserves both common and modality-specific information and allows for the alignment of shared representations while maintaining independent private representations for downstream tasks.
However, these works operate on \textit{instance-level alignment} and do not explore scenarios with limited multimodal data.
The scarcity of paired samples introduces the risk of biased sampling, potentially misleading the alignment process.
With this in mind, we analyze a different approach that factorizes the representation in the information space and enforces \textit{distribution-level} alignment to capture a more comprehensive correlation between modalities by \textit{emphasizing their information content rather than just their geometric proximity}.
The intuition behind this is that instead of individual sample pairs, we aim to align modalities by the global structure (as shown in Figure~\ref{fig:alignment_illustration}). 
When the multimodal pairs are scarce, the distributional alignment aims to be \textit{resilient to sampling biases} and capture meaningful cross-modal relationships.

\subsubsection{Distributional Alignment through Information-theory based Factorization}\label{subsec:info}
We now formally define the factorization problem in the information space.
Without loss of generality, we state the definitions for two modalities, $\mathcal{X} = \{X_1, X_2\}$, but they can be generalized to more modalities. 

First, we are interested in constructing a compact random variable $U$ (shared representation) that can perform any task that can be achieved using $X_1$ separately and $X_2$ separately. Formally, we define a sufficient common variable as follows.

\begin{definition} (Sufficient Common Variable) $U$ is defined as the sufficient common variable between $X_1,X_2$ if and only if $U = g_1(X_1)=g_2(X_2)$ for some $g_1,g_2$, and

{\small{
\begin{equation}\label{eq:def-1}
    (\forall f_1,f_2) \Big([f_1(X_1)=f_2(X_2)]\implies  [(\exists f) f(U) = f_1(X_1) = f_2(X_2)]\Big),
\end{equation}
}}

\noindent namely, any common (shared) function between $X_1,X_2$ can be computed using $U$.
Building on the sufficient common variable, we define the shared representation to be the most compact form of $U$ with the minimized entropy to ensure that $U$ captures only the essential shared features across modalities. 
\end{definition}

\begin{definition}\label{definition:shared}
(Shared Representation) We refer to a sufficient common variable $U$ with minimal entropy $H(U)$ as the shared representation. 
\end{definition}
However, it is not clear how to find a sufficient common variable or a shared representation. We show that an approximation of the shared representation can be obtained by solving the following optimization problem, and later in Section~\ref{sec:paradigm}, we propose the differentiable loss objectives with proof provided in Appendix~\ref{appendix:formulation}. 
{{
\begin{equation}\label{eq:def-2}
        \text{min } H(U) \text{ s.t. } X_1 \indep X_2 \mid U, (\exists s_1,s_2)\ U = s_1(X_1)=s_2(X_2)
\end{equation}
}
}
The conditional independence in Equation \ref{eq:def-2} enforces a form of distributional alignment, ensuring that given the shared representation $U$ is the most compact aligned representation such that $X_1, X_2$ provide no additional information about each other.
Moreover, we define the private representations $V_1, V_2$ between $X_1, X_2$ as follows.
\begin{definition}\label{definition:private} (Private Representation)
$V_1, V_2$ is the private representation of $X_1, X_2$ if they have minimal entropy among the random variables satisfying: $V_1 = p_1(X_1), V_2 = p_2(X_2)$ for some $p_1,p_2$ and there exist functions $g_1,g_2$ such that $X_1=g_1(V_1,U), X_2=g_2(V_2,U)$, where $U$ is the shared representation.
\end{definition}

Similarly, we look for approximate representations. In particular, we replace equalities with a distance constraint $d$, and independence is replaced by small mutual information. 
In Section~\ref{sec:paradigm}, we discuss the detailed implementation of a differentiable loss function to find the approximate representations.


\begin{figure*}[t!]
\centering
\includegraphics[width=0.9\textwidth]{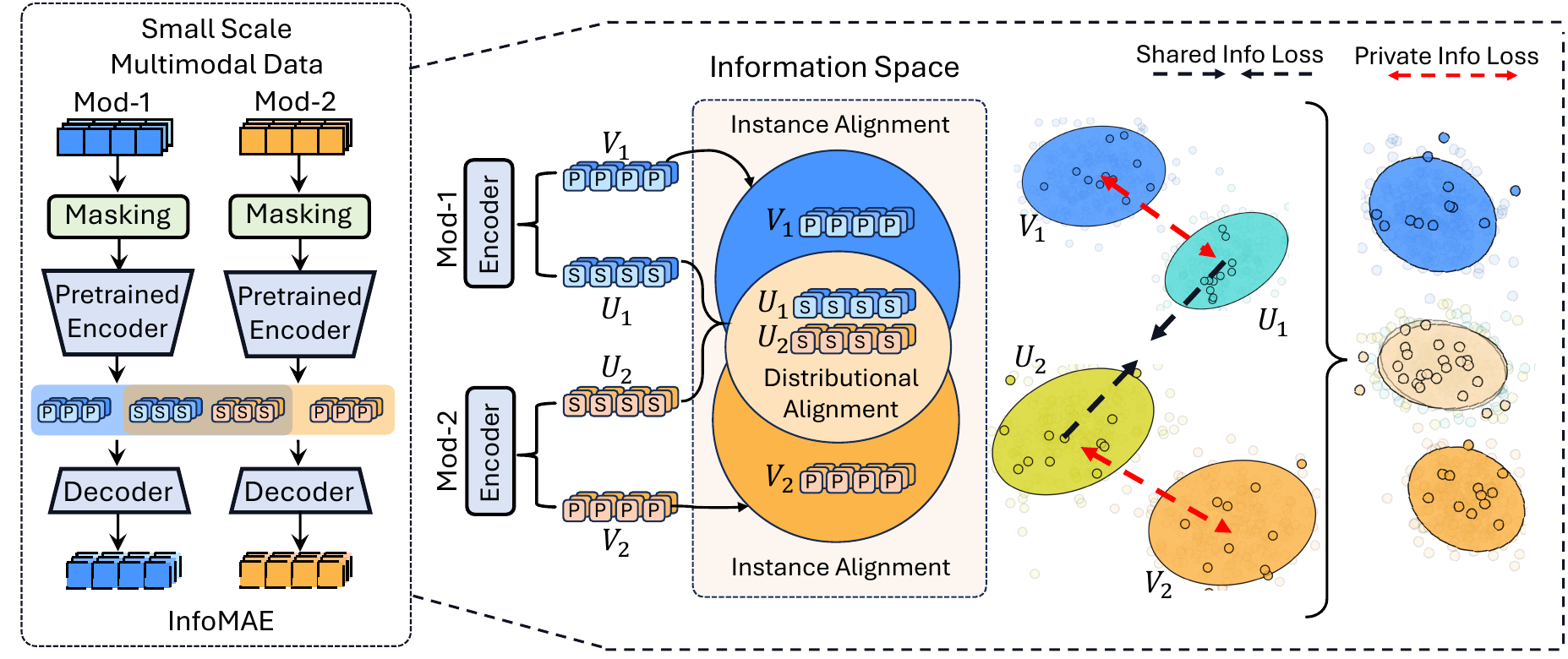}
\caption{
Overview of \model's alignment in the information space. \model adopts an information theory-inspired objective to align the factorized representations. Best viewed in color. 
}
\label{fig:overview_framework}
\vspace{-0.1cm}
\end{figure*}

\section{\model}\label{sec:paradigm}
This section introduces InfoMAE, a novel cross-modal alignment framework that efficiently aligns unimodal representations at the distribution and instance levels.
We provide a detailed overview of \model's cross-modal alignment module in Figure~\ref{fig:overview_framework}. 
\subsection{Unimodal Pretraining}
Unlike standard multimodal SSL that pretrains on synchronized multimodal pairs, we first initiate \textit{unimodal pretraining} on large-scale unsynchronized unimodal data. 
In the first stage, we pretrain each encoder $E_i$ independently on unimodal data $X_i$ with masked reconstruction, defined as the following for each modality $i \in M$:
\begin{equation}
\mathcal{L}_i^\text{unimodal} = ||\hat X_i - X_i||^2 \mid \hat X_i = D_i(E_i(X_i)). 
\end{equation}

The pretrained unimodal encoders $E_i$ extract a generalized representation for each modality $M_i$. However, they do not guarantee information compatibility between modalities when used together in the downstream tasks.
In the following sections, we present \model's different components (as illustrated in Figure~\ref{fig:overview_module}) to calibrate the encoders to \textit{explicitly align} the modalities in both the distribution-level and instance-level with only a limited amount of multimodal pair $\mathcal{X}^s$.

\begin{figure}[t!]
\centering
\includegraphics[width=0.9\columnwidth]{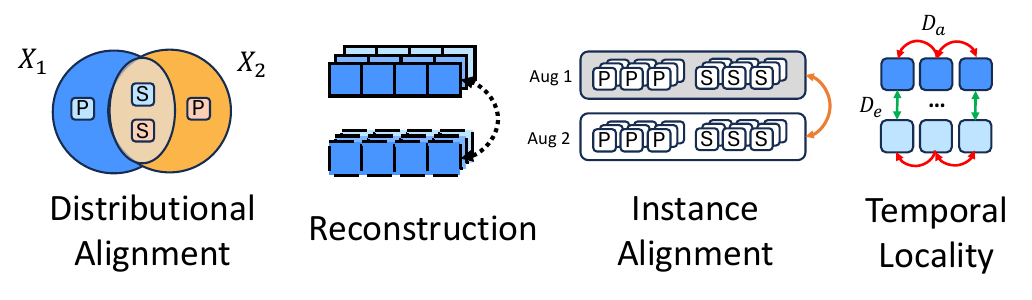}
\vspace{-0.5cm}
\caption{Key learning objectives of InfoMAE.}
\label{fig:overview_module}
\end{figure} 

\subsection{Distribution-level Alignment}\label{method:info}
We begin with the differentiable objective function that we optimize to obtain the (approximate) shared ($U$) and private representations ($V$) defined in Section \ref{subsec:info}. 
To extract $U$ that is a function of both $X_1,X_2$, we equivalently extract $U_1=F_1^\text{shared}(E_1(X_1)), U_2=F_2^\text{shared}(E_2(X_2))$, where $F_1, F_2$ are 2-layer MLP projectors that maps the general representation into factorized representations, and enforce a constraint that $U_1=U_2$.
Similarly, we extract $V_1=F_1^\text{private}(E_1(X_1)), V_2=F_2^\text{private}(E_2(X_2))$.
$\mathcal{U} = \{U_1, U_2\}$ and $\mathcal{V} = \{V_1, V_2\}$ denote the shared and private representations, respectively.

%

\subsubsection{Shared Representation}\label{method:common}
As described in Section~\ref{sec:formulation}, we aim to find the shared representation $U$ that solves the optimization problem in Definition~\eqref{definition:shared}.
However, due to the difficulty of the optimization problem \footnote{The optimization problem in Definition ~\eqref{definition:shared} is non-convex with a possibly infinite number of variables.} and the possibility that a shared representation does not exist, we instead approximate the shared representation by minimizing the following objective

{\small{
\begin{equation}\label{eq:loss-1}
\begin{aligned}
    \mathcal{L}_\text{info}^\text{shared} 
    = &\alpha d(U_1, U_2) + \beta (H(U_1) + H(U_2)) \\ 
    & + I(X_1; X_2 \mid U_1) + I(X_1; X_2 \mid U_2),
\end{aligned}
\end{equation}
}}

where $\alpha$ and $\beta$ are the hyperparameters controlling the weight of each term, and $d(\cdot)$ is a distance measure.
The first two terms in the loss function aim to find $U_1=U_2$ with minimal entropy, while the last two terms aim to impose conditional independence of $X_1,X_2$ given $U_1$ or $U_2$.
We would like to note that the entropy and conditional mutual information listed in Eq.~\eqref{eq:loss-1} are not easy to compute or differentiate. To alleviate this, we reduce these terms into probabilistic density functions below:

\begin{small}
\begin{equation}\label{equation:shared}
    \begin{aligned}
        \mathcal{L}_\text{info}^\text{shared}  &= \alpha d(U_1, U_2) + \sum_{i=1}^{2} \mathbb{E}_{X_1, X_2, U_i}\left[ \log \frac{p_{X_1, X_2, U_i}}{p_{X_1}{p_{X_2}}{p_{U_i}}} \right. \\
        &\left.+ (1 - \beta) \log{p_{X_i, U_i} \over p_{X_i}p_{U_i}} + \log{p_{X_{3-i}, U_i} \over p_{X_{3-i}}p_{U_i}}\right].
    \end{aligned}
\end{equation}
\end{small}

Due to the space limit, we leave the detailed proof and discussion in Appendix \ref{appendix:formulation}. 
To further enhance the differentiability of Eq.~\eqref{equation:shared} by avoiding directly computing the probabilistic density (\eg $\log \frac{p_{X_1, X_2, U_i}}{p_{X_1}{p_{X_2}}{p_{U_i}}}$), we follow \cite{nguyen2010estimating, sugiyama2012density, kim2018disentangling} and utilize the \textit{density-ratio trick} to train a discriminator $\mathcal{R}$, which given $X_1,X_2,U$, outputs the probability that $X_1,X_2,U$ are generated from $p_{X_1, X_2, U_i}$, instead of $p_{X_1}{p_{X_2}}{p_{U_i}}$. The density ratio can then be estimated as

{\small
\begin{equation}\label{equation:drt}
    \log \frac{p_{X_1, X_2, U_1}}{p_{X_1}{p_{X_2}}{p_{U_1}}} = \log{{\mathcal{R}(X_1; X_2; U_1})\over 1 - \mathcal{R}(X_1; X_2; U_1)}.
\end{equation}
}

We train the discriminators jointly with the encoders and describe the training for both in Appendix~\ref{appendix:training}.

%
\subsubsection{Private Representation}\label{method:private}
As the decoders take both the shared and private representations as input, the self-reconstruction objective would enforce the private representations $V$ to capture the implicit modality-specific information.
Following Definition \ref{definition:private}, we minimize the entropy of the private representations $(V_1, V_2)$.
In addition, for each modality, we expect the private and shared representations to be independent.
To better guide the learning process, we explicitly minimize their mutual information.
The objectives of the private representations can be summarized as the following:
\begin{equation}\label{eq:private}
\begin{aligned}
    \mathcal{L}_\text{info}^\text{private}  &= \gamma H(V_{1}) + \gamma H(V_{2}) + \epsilon I(V_1; U_1) + \epsilon I(V_2; U_2), \\
\end{aligned}
\end{equation}
\noindent where $\gamma$ and $\epsilon$ are used as the hyperparameters for private entropy and shared private independence. 
Similar to Eq.\eqref{equation:shared}, we apply \textit{density-ratio trick} (Eq.\eqref{equation:drt}) to estimate each term in Eq.~\eqref{eq:private}.

While the formulation effectively aligns modality representations within the information space, it depends on further learning objectives to ensure they are meaningful for downstream tasks.
Next, we will describe the additional components of \model that are designed to capture meaningful representations.

\subsection{Self Reconstruction}\label{method:reconstruction}

\model applies the masked reconstruction objective to enforce that the learned representation captures the critical semantical information through reconstruction loss.
Following MAE\cite{he2022mae}, we mask out 75\% of the patched input.
To ensure both the shared and private representation are meaningful, the decoder takes in the concatenated shared and private representations $\mathbf{h}_{ij} = \mathbf{u}_{ij}||\mathbf{v}_{ij}$ to reconstruct the input $\mathbf{\hat x}_{ij}$. 
We compute the MSE on the masked portion of the reconstructed $\mathbf{\hat x}_{ij}$ and the original input $\mathbf{x}_{ij}$ with $\delta$ as the hyperparameter and $D_i(\cdot)$ as the decoder for modality $i$. 
{\small{
\begin{equation}\label{reconstruction}
    \mathcal{L}_\text{reconstruction} = \delta\sum_{i \in M}\sum_{j \in B}
    ||\mathbf{x}_{ij} - \mathbf{\hat{x}}_{ij}||^2
    \mid \mathbf{\hat{x}}_{ij} = D_i(\mathbf{h}_{ij}).
\end{equation}
}}

\subsection{Instance-level Alignment}\label{method:augmentation}
Augmentations are primarily used to generate different views for private-space contrastive learning in most existing works \cite{liu2024focal, liang2024factorized, jiang2023understanding}.
However, we argue that the transformation invariance property should be reflected in both private and shared representations to understand the instance variances. 
Thus, InfoMAE adds a contrastive loss on the concatenated representation of the shared and private spaces $\mathbf{h}_{ij}$ by treating two randomly different augmented views as the positive pairs with $\lambda$ and $\tau$ as the hyperparameters.
{\small{
\begin{equation}
    \mathcal{L}_\text{aug} = 
     \lambda\sum_{i \in M}\sum_{j \in B}\log{
    {\exp \left({\mathbf{h}_{ij} \cdot \mathbf{h'}_{ij}} / \tau \right)}
    \over
    {\sum_{k \neq j \in B} \exp \left({{\mathbf{h}_{ij} \cdot \mathbf{h}_{ik}} \over \tau} \right) + \sum_{k \in B} \exp \left({\mathbf{h}_{ij} \cdot \mathbf{h'}_{ik}} \over \tau \right)}
    }.
\end{equation}
}}
\vspace{-0.5cm}

\begin{table*}[!t]
\centering
\caption{
Linear probing performance of Moving Object Detection on domain M. We align pretrained unimodal encoders from different domains.
$A_{Sei} || B_{Aco}$ means seismic encoder from domain A and acoustic encoder from domain B are aligned.
}
\label{tab:synchronization}
\resizebox{0.85\textwidth}{!}{%
\begin{tabular}{@{}c|cc|cccccccccc@{}}
\toprule
\multirow{2}{*}{Framework} &
  \multicolumn{2}{c|}{Aligned Domains} &
  \multicolumn{2}{c|}{$T_{Sei}\ || \ M_{Aco}$} &
  \multicolumn{2}{c|}{$G_{Sei}\ || \ T_{Aco}$} &
  \multicolumn{2}{c|}{$T_{Sei}\ || \ T_{Aco}$} &
  \multicolumn{2}{c|}{$G_{Sei}\ || \ M_{Aco}$} &
  \multicolumn{2}{c}{$T_{Sei}\ || \ G_{Aco}$} \\ \cmidrule(l){2-13} 
 &
  \multicolumn{1}{c|}{\begin{tabular}[c]{@{}c@{}}Joint\\ Pretrain\end{tabular}} &
  \begin{tabular}[c]{@{}c@{}}Modal\\ Alignment\end{tabular} &
  Acc &
  \multicolumn{1}{c|}{F1} &
  Acc &
  \multicolumn{1}{c|}{F1} &
  Acc &
  \multicolumn{1}{c|}{F1} &
  Acc &
  \multicolumn{1}{c|}{F1} &
  Acc &
  F1 \\ \midrule
Unimodal Concat &
  \multicolumn{1}{c|}{\xmark} &
  \xmark &
  0.6731 &
  \multicolumn{1}{c|}{0.6699} &
  0.5392 &
  \multicolumn{1}{c|}{0.5281} &
  0.4454 &
  \multicolumn{1}{c|}{0.4366} &
  0.7247 &
  \multicolumn{1}{c|}{0.7217} &
  0.6584 &
  0.6543 \\ \midrule
CMC~\cite{tian2020contrastive} &
  \multicolumn{1}{c|}{\xmark} &
  \checkmark &
  0.6792 &
  \multicolumn{1}{c|}{0.6702} &
  0.4313 &
  \multicolumn{1}{c|}{0.4356} &
  0.4173 &
  \multicolumn{1}{c|}{0.4032} &
  0.6919 &
  \multicolumn{1}{c|}{0.6877} &
  0.6497 &
  0.6335 \\
FOCAL~\cite{liu2024focal} &
  \multicolumn{1}{c|}{\xmark} &
  \checkmark &
  0.7462 &
  \multicolumn{1}{c|}{0.7432} &
  0.6249 &
  \multicolumn{1}{c|}{0.6249} &
  0.5613 &
  \multicolumn{1}{c|}{0.5579} &
  0.7549 &
  \multicolumn{1}{c|}{0.7527} &
  0.7194 &
  0.7160 \\
GMC~\cite{poklukar2022gmc} &
  \multicolumn{1}{c|}{\xmark} &
  \checkmark &
  0.7354 &
  \multicolumn{1}{c|}{0.7317} &
  0.6591 &
  \multicolumn{1}{c|}{0.6523} &
  0.4756 &
  \multicolumn{1}{c|}{0.4720} &
  0.8044 &
  \multicolumn{1}{c|}{0.8053} &
  0.7247 &
  0.7211 \\
SimCLR~\cite{chen2020simclr} &
  \multicolumn{1}{c|}{\xmark} &
  \checkmark &
  0.3061 &
  \multicolumn{1}{c|}{0.2742} &
  0.2873 &
  \multicolumn{1}{c|}{0.2609} &
  0.2974 &
  \multicolumn{1}{c|}{0.2758} &
  0.2981 &
  \multicolumn{1}{c|}{0.2698} &
  0.2800 &
  0.2308 \\
TNC~\cite{tonekaboni2021unsupervised} &
  \multicolumn{1}{c|}{\xmark} &
  \checkmark &
  0.1969 &
  \multicolumn{1}{c|}{0.0815} &
  0.1788 &
  \multicolumn{1}{c|}{0.1312} &
  0.1855 &
  \multicolumn{1}{c|}{0.1021} &
  0.1929 &
  \multicolumn{1}{c|}{0.0896} &
  0.1949 &
  0.1041 \\
TSTCC~\cite{eldele2021time} &
  \multicolumn{1}{c|}{\xmark} &
  \checkmark &
  0.3001 &
  \multicolumn{1}{c|}{0.2706} &
  0.2639 &
  \multicolumn{1}{c|}{0.2393} &
  0.2867 &
  \multicolumn{1}{c|}{0.2432} &
  0.3048 &
  \multicolumn{1}{c|}{0.2842} &
  0.2860 &
  0.2337 \\ \midrule
\textbf{InfoMAE} &
  \multicolumn{1}{c|}{\xmark} &
  \checkmark &
  \textbf{0.7950} &
  \multicolumn{1}{c|}{\textbf{0.7929}} &
  \textbf{0.6986} &
  \multicolumn{1}{c|}{\textbf{0.7007}} &
  \textbf{0.5928} &
  \multicolumn{1}{c|}{\textbf{0.5908}} &
  \textbf{0.8326} &
  \multicolumn{1}{c|}{\textbf{0.8324}} &
  \textbf{0.7636} &
  \textbf{0.7537} \\ \midrule
Joint Pretrain &
  \multicolumn{1}{c|}{\checkmark} &
  \xmark &
  \multicolumn{5}{c|}{Acc: 0.3329} &
  \multicolumn{5}{c}{F1: 0.3039} \\ \bottomrule
\end{tabular}%
}
\end{table*}

\subsection{Temporal Locality}\label{method:temporal} 
We apply a simple ranking constraint to learn \textit{temporal locality} of time-series signals. 
During pretraining, a sequence sampler randomly selects a batch of sequences consisting of a fixed number of consecutive samples, while the samples across sequences are distant in time.
We define $C_{xy'} = \sum_{i=1}^{L}\sum_{j=1}^{L} d(x_i, y_j),$ as the average Euclidean distance (d) of all sample embedding pairs between the sequence $s$ and $s'$ of length $L$.
Then, the temporal constraint with a hyperparameter $\eta$ can be defined as:: 
{\small{
\begin{equation}
    \begin{aligned}
    \mathcal{L}_{\text{temp}} &= 
    \eta\sum_{s \in B} \sum_{s' \neq s \in B}  \max{(C_{ss} - C_{ss'} + 1, 0)}
    \end{aligned}
\end{equation}
}}
where $C_{ss}$ and $C_{ss'}$ measure the average intra-sequence ($D_a$) and inter-sequence ($D_e$) distances . The added 1 is the margin indicating the minimum gap between the two distances. 
$\eta$ is used as the hyperparameter to control the weight of the temporal constraint.


Finally, the overall training objective of InfoMAE for the cross-modal alignment stage can be summarized as follows:
{\small{
\begin{equation}\label{overall}
    \mathcal{L} = 
    \mathcal{L}_\text{info}^\text{shared}  + 
    \mathcal{L}_\text{info}^\text{private}  + 
    \mathcal{L}_{\text{reconstruction}} + 
    \mathcal{L}_{\text{aug}} +
    \mathcal{L}_{\text{temp}}.
\end{equation}
}}
\model adopts both distribution-level and instance-level alignment of each modality's factorized shared and private representations. 
Since the cross-modal alignment of \model is also a generalized multimodal framework, we would also like to note that this objective can be used as the joint multimodal pretraining objective. 

\section{Evaluation}\label{sec:eval}


\subsection{Experimental Setup}\label{subsec:exp_setup}

\subsubsection{Backbone Encoder}
We adopt the SWIN Transformer (SW-T) \cite{liu2021swin} as the backbone encoder for our framework. SW-T computes local attention within shifted windows on input spectrogram patches to extract comprehensive time-frequency representations.

\subsubsection{Datasets} 
Our experiments focus on Moving Object Detection (MOD) and Human Activity Recognition (HAR). 
The MOD application contains vibration-based datasets using seismic and acoustic sensors.
The HAR application consists of publicly released IMU sensor datasets collected from human subjects performing various daily activities.
To evaluate cross-modal alignment, we simulate a practical scenario where the pretrained domains differ significantly to reflect the diverse signals across different IoT domains.
Under this setting, we have unsynchronized unimodal data from different domains: 
MOD consists of data from three separately collected domains ($M$, $G$, $T$), each with different targets, terrains, and environmental conditions.
HAR consists of two datasets (RW-HAR~\cite{sztyler2016body} and PAMAP2~\cite{reiss2012introducing}). 
We pretrain unimodal encoders with only the unimodal data from each domain and then use small-scale synchronized multimodal pairs for cross-modal alignment.
For joint pretraining, we pretrain on the massive available synchronized multimodal pairs.
We summarize and describe these applications and domains in more detail in Appendix~\ref{appendix:datasets}.

\begin{table*}[]
\centering
\caption{Alignment performance (MM) with different multimodal pair ratios from MOD.}
\label{tab:multimodal_ratio}
\resizebox{0.8\textwidth}{!}{%
\begin{tabular}{@{}c|cc|cc|cc|cc|cc|cc@{}}
\toprule
\multirow{2}{*}{Multimodal Data} &
  \multicolumn{2}{c|}{Supervised} &
  \multicolumn{2}{c|}{Joint Pretrain} &
  \multicolumn{2}{c|}{CMC} &
  \multicolumn{2}{c|}{GMC} &
  \multicolumn{2}{c|}{FOCAL} &
  \multicolumn{2}{c}{\textbf{InfoMAE}} \\ \cmidrule(l){2-13} 
     & Acc & F1 & Acc    & F1     & Acc    & F1     & Acc    & F1     & Acc    & F1     & Acc             & F1              \\ \midrule
5\% &
  \multirow{4}{*}{0.5740} &
  \multirow{4}{*}{0.5663} &
  0.3329 &
  0.3039 &
  0.7087 &
  0.6989 &
  0.8614 &
  0.8616 &
  0.8694 &
  0.8668 &
  \textbf{0.8828} &
  \textbf{0.8808} \\
15\% &     &    & 0.6142 & 0.6104 & 0.8111 & 0.8062 & 0.8781 & 0.8753 & 0.8727 & 0.8703 & \textbf{0.9049} & \textbf{0.9028} \\
25\% &     &    & 0.7071 & 0.7938 & 0.8433 & 0.8372 & 0.8774 & 0.8759 & 0.8848 & 0.8831 & \textbf{0.9290} & \textbf{0.9270} \\
50\% &     &    & 0.8942 & 0.8920 & 0.8754 & 0.8724 & 0.8948 & 0.8938 & 0.9009 & 0.8994 & \textbf{0.9377} & \textbf{0.9367} \\ \bottomrule
\end{tabular}%
}
\end{table*}

\subsubsection{Baselines} We compare \model with different SOTA SSL baselines including unimodal CL (SimCLR\cite{chen2020simple}, MoCo\cite{chen2021mocov3}), multimodal CL (CMC\cite{tian2020contrastive}, GMC\cite{poklukar2022gmc}, FOCAL \cite{liu2024focal}), temporal CL (TNC\cite{tonekaboni2021unsupervised}, TSTCC\cite{eldele2021time}), and MAE based frameworks (MAE\cite{he2022mae}, CAV-MAE\cite{gong2022contrastive}).


\subsection{Cross-Modal Alignment Evaluation}\label{sec:eval_paradigm}

\subsubsection{Moving Object Detection}
We evaluate \model against prior CL works \cite{chen2020simple, eldele2021time, liu2024focal, poklukar2022gmc, tian2020contrastive, tonekaboni2021unsupervised} on cross-modal alignment with various combinations of unimodal encoders (seismic and acoustic) pretrained with different domains.
We align the encoders with a small scale of multimodal pairs (5\% of the unimodal data scale) and an even smaller subset of labeled multimodal pairs from domain M for finetuning.
MOD application involves two modalities (seismic and acoustic). Therefore we represent the domains of the unimodal representations with two letters (\eg $T_{\text{Sei}}||G_\text{Aco}$ represents aligning the seismic encoder pretrained on domain T and acoustic encoder pretrained on domain G).

In addition to the prior CL baselines, we also show the performance for direct concatenation of the pretrained unimodal representations without any alignment and for Joint Multimodal Pretraining on the same amount of synchronized multimodal pairs.
We present the finetune accuracy and F1-score in Table \ref{tab:synchronization}, \model consistently outperforms the unimodal concatenation by a significant margin since direct concatenation fails to exploit cross-modal correspondence.
CMC and other unimodal SSL frameworks even have negative impacts compared to direct concatenation, indicating that unimodal objectives or simply aligning the multimodal representations without considering the modality discrepancy could hurt the downstream performance. 
\model also achieves better results than FOCAL and GMC, underscoring the benefits of enforcing distribution-level alignment over instance-level alignment in downstream tasks with limited multimodal data.
When the same amount of multimodal data is used for Joint Multimodal Pretraining, the significant gap between the aligned unimodal models and the joint pretrained multimodal model suggests the feasibility of transferring pretrained unimodal representations to multimodal representations with only limited (5\%) synchronized multimodal data. 
Note that some domain combinations (\eg, $G_\text{sei}||T_\text{aco}$, $T_\text{sei}||T_\text{aco}$, $T_\text{sei}||G_\text{aco}$) do not even overlap with the alignment and finetuning domain $M$. 

\begin{table}[]
\centering
\caption{
Linear probing performance of HAR on PAMAP2 by aligning pretrained unimodal encoders.
}
\label{tab:har_alignment}
\resizebox{\columnwidth}{!}{%
\begin{tabular}{@{}c|cc|cc|cc@{}}
\toprule
\begin{tabular}[c]{@{}c@{}}Unimodal\\ Pretrain Domain\end{tabular} &
  \multicolumn{2}{c|}{Combined} &
  \multicolumn{2}{c|}{PAMAP2} &
  \multicolumn{2}{c}{RW-HAR} \\ \midrule
\begin{tabular}[c]{@{}c@{}}Multimodal\\ Alignment Domain\end{tabular} &
  \multicolumn{2}{c|}{PAMAP2} &
  \multicolumn{2}{c|}{PAMAP2} &
  \multicolumn{2}{c}{PAMAP2} \\ \midrule
Metric & Acc    & F1     & Acc    & F1     & Acc    & F1     \\ \midrule
Concat & 0.7843 & 0.7000 & 0.7763 & 0.6210 & 0.5675 & 0.4187 \\
CMC    & 0.7334 & 0.6508 & 0.7285 & 0.6788 & 0.7010 & 0.5956 \\
FOCAL  & 0.7922 & 0.7129 & 0.7354 & 0.6327 & 0.7643 & 0.6243 \\
GMC    & 0.7314 & 0.5915 & 0.7344 & 0.5869 & 0.7414 & 0.5816 \\
SimCLR & 0.7299 & 0.6190 & 0.7075 & 0.5426 & 0.7225 & 0.5581 \\
TNC    & 0.5431 & 0.4080 & 0.5889 & 0.4824 & 0.6378 & 0.5167 \\
TSTCC  & 0.7299 & 0.6003 & 0.7065 & 0.5773 & 0.7354 & 0.5864 \\ \midrule
\textbf{InfoMAE} &
  \textbf{0.8261} &
  \textbf{0.7303} &
  \textbf{0.8117} &
  \textbf{0.7175} &
  \textbf{0.7912} &
  \textbf{0.6901} \\ \bottomrule
\end{tabular}%
}
\end{table}

\subsubsection{Human Activity Recognition}
Besides MOD application, we also evaluate \model on HAR applications.
In contrast to MOD evaluation, which aligns unimodal encoders pretrained on different domains, we analyze how additional unsynchronized data from the same domains could assist the downstream performance given the limited number of multimodal pairs.
Here, we independently pretrain all unimodal encoders on unsynchronized IMU data from either PAMAP2, RW-HAR, or Combined, which is the concatenation of the former two.
Then, we use a small portion of the synchronized multimodal data pairs from PAMAP2 for cross-modal alignment and downstream finetuning. 
We present the results in Table~\ref{tab:har_alignment}. 
\model consistently achieves the best performance, with an average of 4.09\% and 5.16\% improvements in accuracy and the F1-score compared to the best-performing baseline, FOCAL. 
The improvement is most significant in aligning unimodal encoders pretrained on RW-HAR, which completely differs from the alignment set (PAMAP2).
This further demonstrates \model's robustness as an alignment framework with a limited amount of multimodal pairs, reflecting its superior ability to utilize the unimodal data better even when they are from different domains.

\begin{figure}[!t]
\centering
\includegraphics[width=0.47\textwidth]{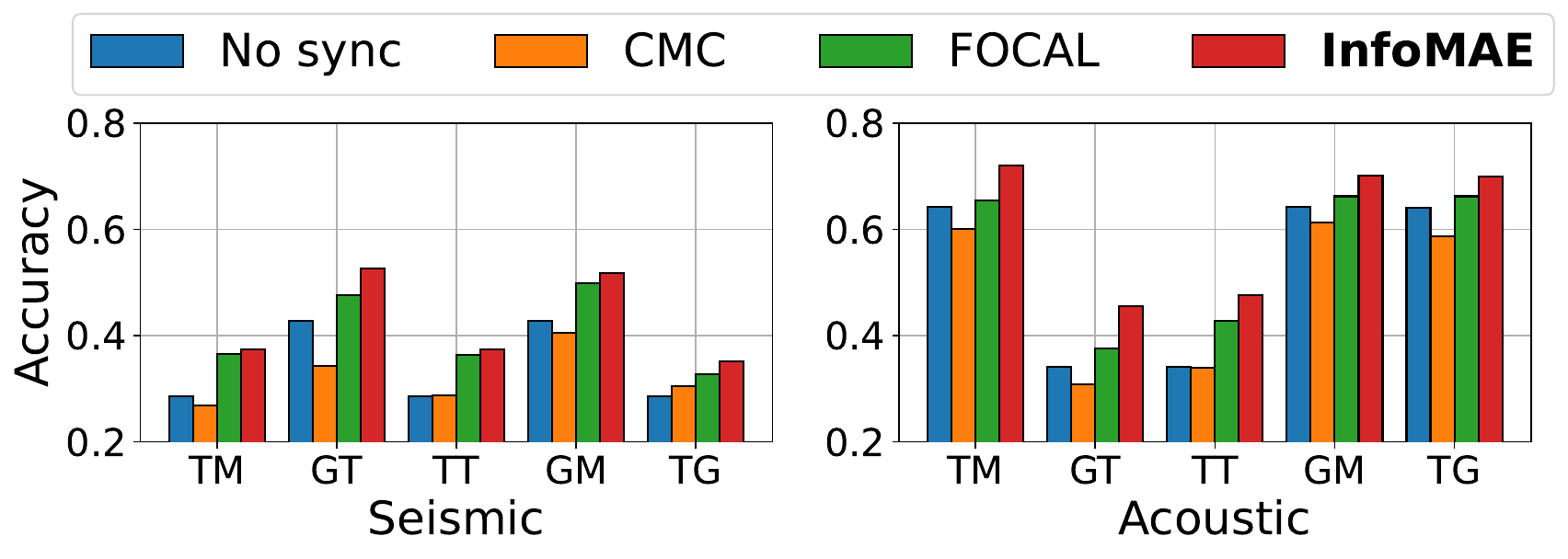}
\vspace{-0.4cm}
\caption{Unimodal linear probing accuracy of MOD with and without cross-modal alignment.}
\label{fig:unimodal}
\end{figure}

\subsection{Unimodal Evaluation} We analyze how incorporating the multimodal correspondences into each unimodal encoder after alignment could benefit the downstream tasks. 
Figure \ref{fig:unimodal} shows the accuracy for seismic and acoustic modalities before and after cross-modal alignment in the MOD application.
With limited multimodal pairs, the pretrained unimodal encoders could gain the most significant performance improvements with \model.
This emphasizes the \model's superior efficiency in enforcing cross-modal correspondence to each modality to improve their downstream performance, with only a few multimodal pairs required. 
With \model, the aligned unimodal model can generate the most holistic representations through distributional alignment compared to geometric alignment (CMC, FOCAL).

\subsection{Multimodal Pairing Efficiency}
We also evaluate \model's alignment performance at varying amounts of multimodal data for MOD application in Table \ref{tab:multimodal_ratio}.
We align both encoders pretrained from domain M ($M_\text{sei}||M_\text{aco}$) and compare them to standard joint pretraining with different ratios of multimodal data. 
Additionally, we provide supervised performance on the same amount of labeled data used for finetuning.
\model consistently achieves superior multimodal data efficiency, with minimal degradation as we reduce the number of multimodal pairs.
\model has an average of 3.42\% gain over the highest-performing baselines and over 60\% compared to joint model pretraining, which performs poorly in the absence of multimodal data.
Joint pretraining even performs worse than the supervised approach with only 5\% of multimodal data, indicating the standard self-supervised pretraining fails to learn effective representations with an insufficient amount of synchronized multimodal data. In contrast, the two-stage learning paradigm of \model leveraging widely available unsynchronized unimodal data could effectively mitigate this problem.
\begin{figure}[!t]
\centering
\includegraphics[width=0.48\textwidth]{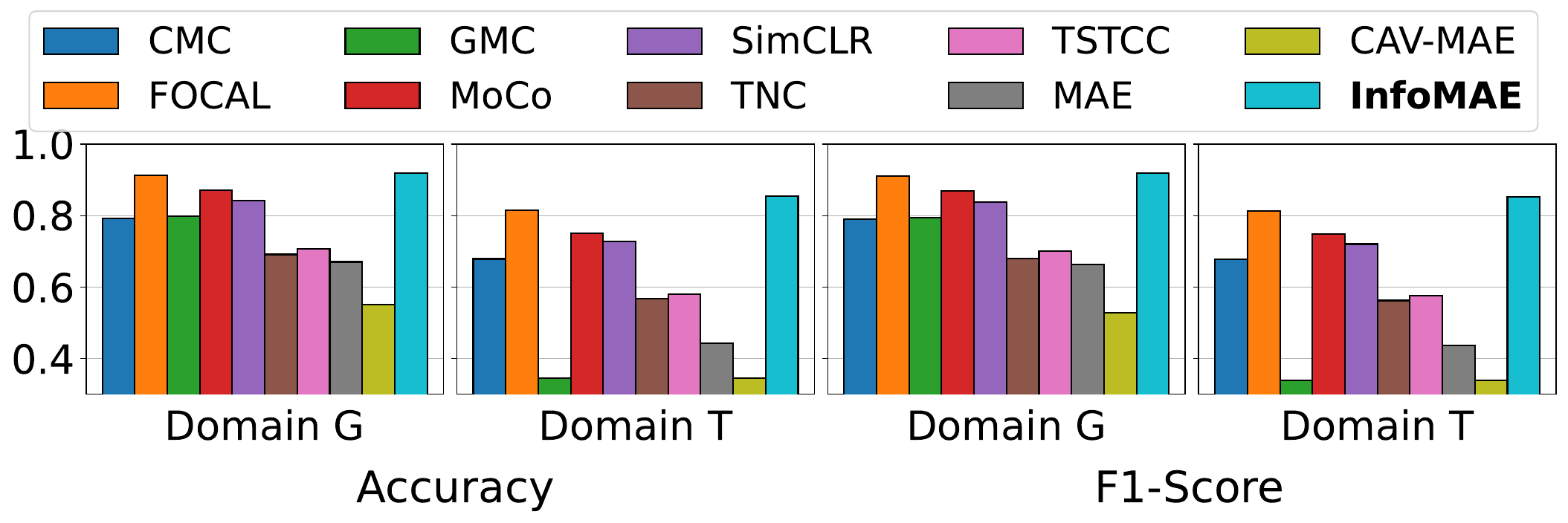}
\vspace{-0.7cm}
\caption{Performance of Joint Pretraining on MOD (seismic and acoustic) dataset and then finetuned on unseen domains.}
\label{fig:domain_generalization}
\end{figure}

\subsection{Standard Mutimodal Pretraining on Large-scale Synchronized Dataset}
While \model excels as an efficient cross-modal alignment framework under limited pairs, it also demonstrates remarkable flexibility as a standard multimodal SSL framework. We evaluate \model against prior state-of-the-art works on Joint Multimodal Pretraining using abundant multimodal pairs, as shown in Figure~\ref{fig:domain_generalization}.
We use synchronized, unlabeled multimodal data from the MOD dataset to pretrain backbone encoders. Then we freeze the pretrained encoders and perform linear probing using labeled multimodal data from domains $G$ and $T$, as described in Section~\ref{subsec:exp_setup}.
\model consistently outperforms the MAE-based framework and achieves better performance than other contrastive baselines.
We leave more evaluation on Joint Multimodal Pretraining across four real-world datasets to Appendix~\ref{appendix:evaluation}.
Prior works, primarily designed for joint multimodal pretraining, often struggle with limited multimodal pairs and show significant performance degradation.
In contrast, \model not only improves multimodal pairing efficiency but maintains high performance with minimal performance degradation.

\subsection{Ablation Study}\label{subsec:ablation}
Finally, we study how each module of InfoMAE contributes to its performance through ablation studies. 
We evaluate four variants of \model by removing temporal, shared, private, and augmentation components in Table \ref{tab:ablation_matching}. 
The absence of either shared or private components leads to a significant degradation, implying the significance of factorized representation for cross-modal alignment.
The drop in performance after removing temporal locality constraints also indicates the importance of learning temporal correspondence for time-series signals.
Without temporal locality, the learned representations lose crucial temporal correspondence and can significantly compromise the ability to learn multimodal correspondences on top of the unimodal representations.
Conversely, \model without augmentations does not significantly reduce the performance, demonstrating its robustness toward augmentation choices, in contrast to many contrastive learning frameworks that require careful selection of augmentations to avoid representational collapses.


\begin{table}[]
\centering
\caption{Ablation accuracy of MOD cross-modal alignment.}
\label{tab:ablation_matching}
\resizebox{0.95\columnwidth}{!}{%
\begin{tabular}{@{}c|c|c|c|c|c@{}}
\toprule
Frameworks & $T_\text{sei}||M_\text{aco}$     & $G_\text{sei}||T_\text{aco}$     & $T_\text{sei}||T_\text{aco}$     & $G_\text{sei}||M_\text{aco}$     & $T_\text{sei}||G_\text{aco}$     \\ \midrule
noTemp     & 0.6946 & 0.5881 & 0.5044 & 0.7435 & 0.6651 \\
noShared   & 0.7683 & 0.6504 & 0.5298 & 0.8125 & 0.7395 \\
noPrivate  & 0.5479 & 0.4180 & 0.2873 & 0.6259 & 0.5399 \\
noAug      & 0.7863 & 0.6973 & 0.5881 & 0.8232 & 0.7924 \\ \midrule
\textbf{InfoMAE} & \textbf{0.7950} & \textbf{0.6986} & \textbf{0.5928} & \textbf{0.8326} & \textbf{0.8326} \\ \bottomrule
\end{tabular}%
}
\end{table}

\section{Related Work}\label{sec:related}

\textbf{Self-Supervised Multimodal Learning.} 
Self-supervised learning (SSL) techniques, such as Contrastive Learning (CL) and masked reconstructions, have achieved significant success in visual, textual, and time-series representation learning \cite{caron2021emerging, eldele2021time, emadeldeen2022catcc, grill2020bootstrap, radford2021learning, shen2021much, tonekaboni2021unsupervised, yang2023simper, yue2022ts2vec, zhang2022tfc}. 
Masked reconstruction learns informative representations by reconstructing masked inputs \cite{brown2020language, devlin2018bert, he2022mae, kong2023understanding, xie2022simmim}, with various masking strategies explored \cite{bandara2023adamae, denizhan2024freqmae, zhang2022mask}, and extended to time-frequency spectrograms \cite{huang2022masked, kara2024phymask} and videos \cite{gupta2024siamese, tong2022videomae}.
Multimodal representation learning has become increasingly important with diverse applications \cite{bos2006eeg, liangmultibench, reiss2012introducing, schmidt2018wesad, zheng2021deep}. 
Recent works leverage CL to learn correspondences between modalities \cite{deldari2022cocoa, pielawski2020comir, ouyang2022cosmo, poklukar2022gmc, tian2020contrastive, trosten2021reconsidering, zolfaghari2021crossclr}, and others pretrain unified encoders for multimodal representations \cite{hu2021unit, mizrahi20244m}.
Factorized Multimodal Learning \cite{liu2024focal, liang2024factorized, jiang2023understanding, tsai2018learning, hsu2018disentangling} further decouples multimodal learning by acknowledging both modality-specific and modality-shared information.
FOCAL \cite{liu2024focal} proposed contrastive learning objectives to learn shared and private representation in the orthogonal space.
FactorizedCL \cite{liang2024factorized} separates the shared and private space based on their relevance to the downstream tasks. 
Some works~\cite{gong2022contrastive, wang2023contrastive} combine CL with MAE to capture cross-modal correspondence.
Yet, these works minimize the geometric modality gap to learn cross-modal correspondences and rely on massive amounts of multimodal data for joint multimodal pretraining. In contrast, \model minimizes the information modality gap to further enhance the downstream performance.
In reducing multimodal data pairs for training, many works~\cite{tran2017missing, ma2021smil, wang2023multi} propose to impute missing modality pairs through feature generations.
Wang \et~\cite{wang2023connecting} proposes using CL to align multimodal encoders through an anchor modality yet still overlooking unimodal data. 
In contrast, \paradigmabbrv minimizes the reliance on multimodal data by taking advantage of a large amount of unimodal data. 

\noindent\textbf{Multimodal Information Theory}.
There has been a long history of exploring common information between random variables in information theory \cite{wyner1975common, gacs1973common, yu2022common}, and it is still an active research field \cite{hanna2022can, hanna2023common,hanna2024on, erixhen2022gaussian}. However, it remains challenging to compute the common information in practical applications. Kleinman1~\et~\cite{kleinman2024gacs} combines Variational Autoencoders with Gacs-Korner Common Information. Mai~\et ~\cite{mai2022multimodal} proposes to measure the information redundancy for multimodal data. However, they do not explicitly consider the unique information for factorization.
\model adopts the informational factorization considering both private and shared information to construct a joint representation in a task-agnostic manner rather than extracting task-related information like \cite{liang2024factorized}.


\section{Discussion \& Conclusion}\label{sec:conclusion}

In this paper, we proposed \paradigm, a pairing-efficient multi-stage SSL paradigm for multimodal IoT sensing. 
It first pretrains independent modality encoders on large-scale unimodal data sets. Then, it leverages a novel information theory-based optimization to achieve distributional cross-modal alignment with only limited multimodal pairs. 
Extensive evaluations compared to standard multimodal SSL frameworks demonstrated the superior efficiency and effectiveness of \model across multiple real-world IoT applications. 
We believe it opens new opportunities for developing more data-efficient and qualitative self-supervised multimodal models.
In the Appendix, we provide additional evaluations and describe more details on the proof, datasets, implementation, and limitations.

\begin{acks}
Research reported in this paper was sponsored in part by the Army Research Laboratory under Cooperative Agreement W911NF-17-20196, NSF CNS 20-38817, and the Boeing Company. 
It was also supported in part by ACE, one of the seven centers
in JUMP 2.0, a Semiconductor Research Corporation (SRC)
program sponsored by DARPA.
The views and conclusions contained in this document are those of the author(s) and should not be interpreted as representing the official policies of the CCDC Army Research Laboratory, or the US government. The US government is authorized to reproduce and distribute reprints for government purposes notwithstanding any copyright notation hereon.
\end{acks}

\bibliographystyle{abbrv}
\bibliography{main}
 \appendix 
\section*{Appendix}
\section{Information Formulation}\label{appendix:formulation}

\subsection{Proof of the Equivalence between \eqref{eq:def-1} and \eqref{eq:def-2}}
We first show the equivalence between the condition \eqref{eq:def-1} and the constraints in \eqref{eq:def-2} by proving the following proposition.
\begin{proposition}\label{prop}
    For random variables $X_1$, $X_2$, if $U=s_1(X_1) = s_2(X_2)$, and there exists $W = g_1(X_1) = g_2(X_2)$ such that $X_1\indep X_2\mid W$, then the following two statements are equivalent.
\begin{small}
\begin{enumerate}[label=(\alph*), leftmargin=15pt]
    \item $(\forall f_1,f_2) \Big([f_1(X_1)=f_2(X_2)]\implies  [(\exists f) f(U) = f_1(X_1) = f_2(X_2)]\Big). \quad$ \label{eq:equiv-1}
    \item There is a one-to-one mapping between $W$ and $U$ (i.e., $X_1\indep X_2\mid U$). \label{eq:equiv-2}
    \vspace{-1.5mm}
\end{enumerate}
\end{small}
\end{proposition}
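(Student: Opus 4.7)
The plan is to prove the two directions separately, with the conditional independence $X_1\indep X_2\mid W$ doing the heavy lifting in both cases. Throughout, I treat ``one-to-one mapping between $W$ and $U$'' as the conjunction of the two statements ``$W$ is a function of $U$'' and ``$U$ is a function of $W$'', and I use that the latter conjunction is equivalent to $X_1\indep X_2\mid U$ because conditional independence is preserved under a bijective reparametrization of the conditioning variable.

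For the direction (b) $\Rightarrow$ (a), assume $U$ and $W$ are in one-to-one correspondence, so that $X_1\indep X_2\mid U$. Fix any pair $f_1,f_2$ with $Z:=f_1(X_1)=f_2(X_2)$. Since $Z$ is a deterministic function of $X_1$, the independence $X_1\indep X_2\mid U$ gives $Z\indep X_2\mid U$. But $Z$ is also a deterministic function of $X_2$, so, given $U$, the variable $Z$ is independent of itself; this forces $Z$ to be almost surely constant given $U$, i.e., $Z=f(U)$ for some measurable $f$. This yields condition \eqref{eq:def-1}.

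For the direction (a) $\Rightarrow$ (b), the easy half is immediate: since $W=g_1(X_1)=g_2(X_2)$ is itself a common function of $X_1,X_2$, condition \eqref{eq:def-1} directly supplies an $f$ with $W=f(U)$. The harder half is the converse, showing $U=h(W)$. For this I would exploit the assumption $X_1\indep X_2\mid W$ to argue that, conditional on $\{W=w\}$, the joint support of $(X_1,X_2)$ is a product (``rectangular'') set $A_w\times B_w$; then the identity $s_1(x_1)=s_2(x_2)$ that must hold on the support of $(X_1,X_2)\mid W=w$ forces $s_1$ to be constant on $A_w$ and $s_2$ to be constant on $B_w$, so $U$ takes a single value on the event $\{W=w\}$ and is thus a function of $W$.

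The main obstacle is this last step: making the ``rectangular support'' argument rigorous beyond the discrete case. For general measurable spaces, one cannot literally assert a product support, so I would instead work with regular conditional distributions, write the conditional law of $(X_1,X_2)$ given $W$ as a product of its marginals (this is precisely what $X_1\indep X_2\mid W$ means), and then apply Fubini to the event $\{s_1(X_1)\ne s_2(X_2)\}$, which has probability zero under the joint law; conditional on $W=w$, this event being null under the product of the conditional marginals forces $s_1$ to be a.s.\ constant under the conditional law of $X_1\mid W=w$ (and similarly for $s_2$). Hence $U$ is a.s.\ a function of $W$, closing the equivalence.
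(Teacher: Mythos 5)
Your proof is correct, and it departs from the paper's in one meaningful place. The direction (b)~$\Rightarrow$~(a) is essentially the paper's argument in probabilistic rather than information-theoretic dress: the paper runs $0=I(X_1;X_2\mid U)\ge I(f_1(X_1);f_2(X_2)\mid U)\ge 0$ and then extracts $H(f_1(X_1)\mid U)=0$, which is exactly your ``$Z$ is conditionally independent of itself, hence a.s.\ constant given $U$.'' The real divergence is in the hard half of (a)~$\Rightarrow$~(b). The paper closes it with a symmetry trick: since $W$ is itself a common function satisfying $X_1\indep X_2\mid W$, the already-proven direction applies with $W$ in the role of $U$ and $(s_1,s_2)$ in the role of $(f_1,f_2)$, giving $U=h_2(W)$ in one line. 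You instead re-derive this from scratch via regular conditional distributions and Fubini on the null event $\{s_1(X_1)\ne s_2(X_2)\}$. Your argument is sound, but it is worth noticing that it is precisely an unrolled instance of your own first direction, so you could have closed the loop by citation exactly as the paper does; conversely, your measure-theoretic route has the advantage of being self-contained and of sidestepping any delicacy about entropy and mutual information for non-discrete variables (where ``$H(\cdot\mid U)=0$ implies deterministic'' requires more care than the paper lets on). One small bookkeeping point: statement (b) conflates ``one-to-one mapping between $W$ and $U$'' with ``$X_1\indep X_2\mid U$''; you justify only the direction that a bijective reparametrization preserves conditional independence, which is the direction you actually need, but a complete treatment of the parenthetical equivalence would also note that $X_1\indep X_2\mid U$ conversely yields the bijection by the same two-sided application of direction (b)~$\Rightarrow$~(a).
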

\begin{proof}
We first prove the direction \ref{eq:equiv-2} $\implies$ \ref{eq:equiv-1} using properties of basic information-theory measures (Chapter~2 in \cite{cover1999elements}). For any $f_1,f_2$ such that $f_1(X_1)=f_2(X_2)$, we have
\vspace{-1.5mm}
{\small
\begin{equation}\label{eq:zero_mi}
        0 \stackrel{(i)}{=} I(X_1; X_2|U) \stackrel{(ii)}{\ge} I(f_1(X_1);f_2(X_2)|U) \stackrel{(iii)}{\ge} 0,
\vspace{-0.5mm}
\end{equation}
}

where $(i)$ follows that $X_1$ and $X_2$ are independent conditioned on $U$; $(ii)$ is due to the data processing inequality of mutual information; and $(iii)$ is because the mutual information is always non-negative. \eqref{eq:zero_mi} implies that $I(f_1(X_1);f_2(X_2)|U)=0$. In addition, since $I(f_1(X_1);f_2(X_2)|U)=H(f_1(X_1)|U)-H(f_1(X_1)|f_2(X_2),U)$ and $H(f_1(X_1)|f_2(X_2),U)=0$, we have $H(f_1(X_1)|U)=0$. This concludes that there exist a deterministic function $f$ such that $f(U)=f_1(X_1)=f_2(X_2)$. 

Next, we prove the other direction \ref{eq:equiv-1} $\implies$ \ref{eq:equiv-2}. Note that $W$ given in the proposition statement satisfies $W=g_1(X_1)=g_2(X_2)$ and therefore, from \ref{eq:equiv-1}, we know that there exist a function $h_1$ such that $W=h_1(U)$. Since $W$ also satisfies that $X_1\indep X_2\mid W$ and $U=s_1(X_1)=s_2(X_2)$, then applying the direction \ref{eq:equiv-2} $\implies$ \ref{eq:equiv-1}, we have that $U=h_2(W)$ for some function $h_2$. Therefore, there is a one-to-one mapping between $W$ and $U$.
\end{proof}

Note that it is difficult to obtain a random variable $U$ that satisfies \ref{eq:equiv-1} (i.e. the sufficient common variable in Defined \ref{definition:shared}). The Proposition \ref{prop} allows us to find a random variable $W$ (if it exists) instead. And the one with minimum entropy can be obtained by solving the optimization problem \eqref{eq:def-2}.

\begin{table*}[!t]
\centering
\caption{Statistical summaries of domains and datasets}
\label{tab:datasets}
\resizebox{0.9\textwidth}{!}{%
\begin{tabular}{@{}c|ccccc|ccc@{}}
\toprule
Dataset &
  Modalities (Freq) &
  Sample Length &
  Overlap &
  Classes &
  \begin{tabular}[c]{@{}c@{}}\#Pretrain\\ Samples\end{tabular} &
  \begin{tabular}[c]{@{}c@{}}Used for\\ Alignment\end{tabular} &
  \begin{tabular}[c]{@{}c@{}}\#Alignment\\ Samples\end{tabular} &
  \begin{tabular}[c]{@{}c@{}}\# Finetune\\ Samples\end{tabular} \\ \midrule
Domain M      & acoustic (8kHz) seismic (100Hz) & 2 sec & 0\%  & 5 sec & 39,609 & \checkmark & 1981 & 734          \\
Domain G      & acoustic (8kHz) seismic (100Hz) & 2 sec & 0\%  & 2 sec & 35,168 & \xmark     & -    & 3136 (joint) \\
Domain T      & acoustic (8kHz) seismic (100Hz) & 2 sec & 0\%  & 4     & 43,819 & \xmark     & -    & 4205 (joint) \\ \midrule
PAMAP2        & acc, gyro, mag, lig (all 50Hz)  & 5 sec & 50\% & 18    & 9,611  & \checkmark & 4805 & 961          \\
RW-HAR & acc, gyr, mag (all 100Hz)       & 2 sec & 50\% & 8     & 12,887 & \xmark     & -    & -            \\ \bottomrule
\end{tabular}%
}
\end{table*}

\subsection{Derivation of the Shared Loss \eqref{eq:loss-1}}
We first group the terms that only depend on $U_1$ or $U_2$ as follows.
{\small
\begin{align}
    \mathcal{L}_\text{info}^\text{shared} &= \alpha d(U_1, U_2) + \beta (H(U_1) + H(U_2)) + I(X_1; X_2 \mid U_1) \\ &+ I(X_1; X_2 \mid U_2)\nonumber \\
     &= \alpha d(U_1, U_2) + \mathcal{L}{(U_1)} + \mathcal{L}{(U_2)}, \label{eq:shared-group}
\end{align}
}
where $d(U_1, U_2)$ can be measured using the Euclidean distance or other distance measures. And
{\small
\begin{align}
        \mathcal{L}{(U_1)} &= I(X_1; X_2 | U_1) + \beta H(U_1) \nonumber\\
        &\stackrel{(i)}{=} I(X_1; X_2 | U_1) + \beta I(X_1; U_1) \nonumber\\
        &\stackrel{(ii)}{=} \mathbb{E}_{U_1} \left[D_{KL}(p_{X_1,X_2|U_1} || p_{X_1|U_1}p_{X_2|U_1})\right] \\
        & \hspace{15pt}+ \beta D_{KL}(p_{X_1,U_1} || p_{X_1}p_{U_1}) \nonumber\\
        &= \mathbb{E}_{X_1,X_2,U_1} \left[ \log \frac{p_{X_1,X_2|U_1}}{p_{X_1|U_1}p_{X_2|U_1}} \right] + \beta \mathbb{E}_{X_1,U_1} \left[ \log \frac{p_{X_1,U_1}}{p_{X_1}p_{U_1}} \right] \nonumber\\
        &= \mathbb{E}_{X_1,X_2,U_1} \left[ \log \frac{p_{X_1,X_2,U_1}p_{U_1}}{p_{X_1,U_1}p_{X_2,U_1}} \right] + \beta \mathbb{E}_{X_1,U_1} \left[ \log \frac{p_{X_1,U_1}}{p_{X_1}p_{U_1}} \right] \nonumber\\
        &= \mathbb{E}_{X_1,X_2,U_1} \left[ \log \frac{p_{X_1,X_2,U_1}}{p_{X_1}p_{X_2}p_{U_1}} + \log\frac{p_{X_1}p_{U_1}}{p_{X_1,U_1}}+\log\frac{p_{X_2}p_{U_1}}{p_{X_2,U_1}}\right] \\
        & \hspace{15pt} + \beta \mathbb{E}_{X_1,U_1} \left[ \log \frac{p_{X_1,U_1}}{p_{X_1}p_{U_1}} \right] \nonumber\\
        &= \mathbb{E}_{X_1,X_2,U_1} \left[ \log \frac{p_{X_1,X_2,U_1}}{p_{X_1}p_{X_2}p_{U_1}} \right. \\
        & \left. \hspace{55pt} + (1 - \beta) \log \frac{p_{X_1,U_1}}{p_{X_1}p_{U_1}} + \log \frac{p_{X_2,U_1}}{p_{X_2}p_{U_1}} \right], \label{eq:shared-group1}
\end{align}
}
where $(i)$ follows the relation between mutual information an entropy that $I(X_1;U_1) = H(U_1) - H(U_1|X_1)$ and $H(U_1|X_1)=0$ because $U_1$ is a deterministic function of $X_1$; $(ii)$ is by definition of the conditional mutual information; and the remaining equalities use the Bayes' rule. Similarly, we have
{\small
\begin{flalign}
        &\mathcal{L}{(U_2)} = I(X_1; X_2 | U_2) + (1 - \beta )H(U_2) &&\nonumber\\
        &= \mathbb{E}_{X_1,X_2,U_2} \left[ \log \frac{p_{X_1,X_2,U_2}}{p_{X_1}p_{X_2}p_{U_2}}  + \log \frac{p_{X_1,U_2}}{p_{X_1}p_{U_2}} + \beta \log \frac{p_{X_2,U_2}}{p_{X_2}p_{U_2}} \right]\label{eq:shared-group2}
\end{flalign}
}
Combining \eqref{eq:shared-group}, \eqref{eq:shared-group1} and \eqref{eq:shared-group2}, we can obtain
\begin{small}
\begin{align}
    \mathcal{L}_\text{info}^\text{shared}  &= \alpha d(U_1, U_2) + \sum_{i=1}^{2} \mathbb{E}_{X_1, X_2, U_i} \left[ \log \frac{p_{X_1, X_2, U_i}}{p_{X_1}{p_{X_2}}{p_{U_i}}} \right.\\
    & \left. \hspace{55pt}+ (1 - \beta) \log{p_{X_i, U_i} \over p_{X_i}p_{U_i}} + \log{p_{X_{3-i}, U_i} \over p_{X_{3-i}}p_{U_i}}\right].
\end{align}
\end{small}

\subsection{Derivation of the Private Loss \eqref{eq:private}}
Similar to \eqref{eq:shared-group1}, since $H(V_1|X_1)=H(V_2|X_2)=0$, we have that
{\small
\begin{equation}
\begin{aligned}
    \mathcal{L}_\text{info}^\text{private}  &= \gamma H(V_{1}) + \gamma H(V_{2}) + \epsilon I(V_1; U_1) + \epsilon I(V_2; U_2), \\
    & = \gamma I(X_1;V_{1}) + \gamma I(X_2;V_{2}) + \epsilon I(V_1; U_1) + \epsilon I(V_2; U_2), \\
     &= \sum_i \mathbb{E}_{X_i, V_i, U_i} \left[ \gamma \log{p_{X_i, V_i} \over p_{X_i}p_{V_i}} + \epsilon \log{p_{V_i, U_i} \over p_{V_i}p_{U_i}} \right].
\end{aligned}
\end{equation}
}
\section{Datasets}\label{appendix:datasets}
This section describes the cross-modal alignment and joint multimodal pretraining datasets from two applications: Moving Object Detection (MOD) and Human Activity Recognition (HAR). Table~\ref{tab:datasets} provides the statistical values of each domain.

\subsection{Cross-modal Alignment Datasets}\label{appendix:alignment_dataset}
\subsubsection{Moving Object Detection}

We have seismic and acoustic signals describing different vehicles on three different domains. For simplicity, we use one letter to represent each domain.

\noindent\textbf{Domain M} is a publicly released~\cite{liu2024focal} moving object detection dataset consisting of signals from 7 different moving vehicles, recorded at three different distances and four different speeds. 

\noindent\textbf{Domain G} contains a self-collected dataset on state park grounds near an outdoor research facility with four sensor nodes deployed.
The dataset contains four distinct targets navigating the neighborhood near the sensors in some arbitrary order.

\noindent\textbf{Domain T} has a similar setup as MOD but involves different targets and scenes. This set contains data collected from a paved parking lot, unpaved trails, and gravel roads within a park. Vibration signals of 2 standard-size SUVs from different manufacturers, one lightweight sports car, and one muscle car were recorded. One hour of data for each vehicle was collected at each scene.  use the first 50 minutes for training and the last 10 minutes for validation and testing. 
\subsubsection{Human Activity Recognition}

Unlike the MOD application, where we used data from different domains for unimodal pretraining, we leveraged two different HAR datasets for unimodal pretraining and cross-modal alignment to evaluate the scenario in which IMU data has high degrees of heterogeneity.

\noindent\textbf{RW-HAR \cite{sztyler2016body}}\label{appendix:dataset_realworld_har} is a public dataset with accelerometer, gyroscope, magnetometer, and light signals sampled at 50Hz. It includes data from 15 subjects performing 8 human activities. We use the data collected from the subjects' waist and randomly select ten subjects for training, 2 for validation, and 3 for testing.

\noindent\textbf{PAMAP2 \cite{reiss2012introducing}}\label{appendix:dataset_pamap2} contains inertial data from 18 human daily activities performed by 9 subjects. PAMAP2 includes 9,611 instances, with data captured using inertial measurement units (IMUs) placed on the chest, the wrist of the dominant arm, and the dominant side’s ankle. We use the data collected from the wrist. The signal is collected at a sampling rate of 100Hz. 7 random subjects are used for training, and 2 subjects for testing.

\noindent\textbf{Combined} is a concatenated dataset of RealWord-HAR and PAMAP2. Since PAMAP2 does not contain any light signals, we drop the light modality and only use the three IMU modalities for evaluation.  



\section{Data Preprocessing}\label{appendix:preprocess}
We partition the time-series data into segments of uniform length.
Each segment is subdivided into intervals with overlaps. We apply the Fourier transform to the signal in each interval to derive its spectral content, thereby retaining both temporal and spectral characteristics.
During training, we adopt the same augmentations as FOCAL~\cite{liu2024focal} to the input before and after the fourier-transform. 

\begin{table}[t!]
\centering
\caption{Inference profiling on Raspberry Pi 4 device.}
\vspace{-0.4cm}
\label{tab:appendix_infer}
\resizebox{0.9\columnwidth}{!}{%
\begin{tabular}{@{}ccccc@{}}
\toprule
App. & P99 (s) & Average (s) & Model Size (MB) & \# Parameters (M) \\ \midrule
MOD          & 0.5803  & 0.2259      & 47.9820         & 12.565831                \\
HAR          & 0.1728  & 0.1690      & 17.8810         & 4.669818                 \\ \bottomrule
\end{tabular}%
}
\end{table}

\section{Experiment and Implementation Details}\label{appendix:training}

During pretraining, we randomly sample a batch of sequences of $L$ consecutive samples.
We jointly optimize the backbone encoders and decoders with AdamW \cite{loshchilov2018decoupled} optimizer and Cosine scheduler \cite{loshchilov2017sgdr}. 
We also train discriminators for density-ratio estimations \cite{sugiyama2012density, kim2018disentangling}. 
We apply convolution blocks to map the time-frequency sample into a one-dimensional embedding to match the input dimension $X_1$ with their shared and private representations $V_1, U_1$, followed by 5-layer MLP to their density ratio.

While InfoMAE’s training requires additional computation due to the discriminators and the MAE architecture, we would like to note that \model incurs no extra inference overhead.
We evaluated InfoMAE’s inference performance on a Raspberry Pi 4 device and present the computational overhead in Table~\ref{tab:appendix_infer}.
The result demonstrates that InfoMAE achieves real-time inference in less than 1 second, making it suitable for real-time deployment in WoT/IoT applications where only low-end devices are available.

\noindent\textbf{Computation}. We conducted our experiments on NVIDIA RTX 4090 GPUs (24GB). The training time varies from minutes for finetuning to 2 days for pretraining. The training time for cross-modal alignment is faster with fewer multimodal pairs. 

\section{Additional Evaluation}\label{appendix:evaluation}
\begin{table}[!t]
\centering
\caption{Cross-modal alignment with sparse pairs.}
\vspace{-0.3cm}
\label{tab:sparse_align}
\resizebox{0.9\columnwidth}{!}{%
\begin{tabular}{@{}c|cc|cc|cc@{}}
\toprule
Framework & \multicolumn{2}{c|}{GMC} & \multicolumn{2}{c|}{FOCAL} & \multicolumn{2}{c}{InfoMAE}       \\ \midrule
Ratio     & Acc         & F1         & Acc          & F1          & Acc             & F1              \\ \midrule
0.01      & 0.8252      & 0.8247     & 0.8573       & 0.8556      & \textbf{0.8794} & \textbf{0.8786} \\
0.02      & 0.8305      & 0.8272     & 0.8580       & 0.8573      & \textbf{0.8821} & \textbf{0.8811} \\
0.03      & 0.8667      & 0.865      & 0.8560       & 0.8529      & \textbf{0.8875} & \textbf{0.8841} \\ \bottomrule
\end{tabular}%
}
\end{table}
\subsection{Evaluation: Sparse cross-modal alignment}\label{appendix:sparse}

We conduct additional experiments to evaluate \model under extremely sparse conditions, reducing the availability of multimodal pairs to as low as 1\%, 2\%, and 3\%. These results, presented in Table~\ref{tab:sparse_align}, highlight that \model continues to outperform top-performing baselines across these extremely constrained scenarios. The findings illustrate InfoMAE’s robustness in aligning representations under sparse multimodal pairing conditions.

\subsection{Joint Multimodal Pretraining}

\begin{figure}[!t]
\centering
\includegraphics[width=\columnwidth]{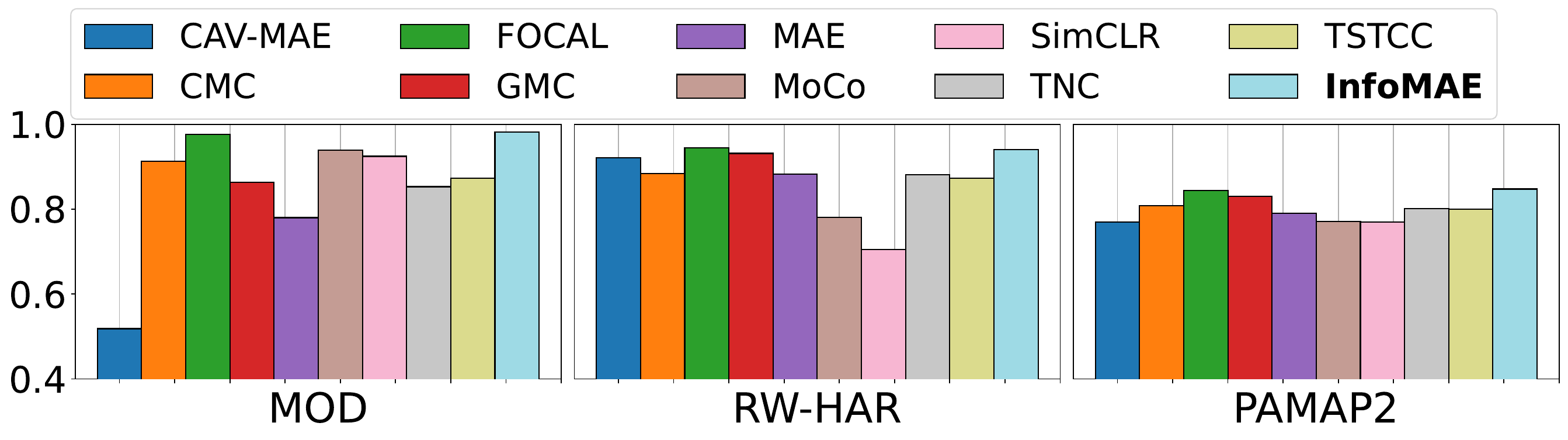}
\vspace{-0.6cm}
\caption{Joint Multimodal Pretraining compared with previous joint pretraining SSL frameworks on four datasets.}
\label{fig:joint_pretrain}
\end{figure}

\begin{table}[!t]
\centering
\caption{Ablation accuracy with Joint Pretraining.}
\vspace{-0.3cm}
\label{tab:ablations}
\resizebox{0.7\columnwidth}{!}{%
\begin{tabular}{@{}c|c|c|c@{}}
\toprule
Frameworks     & MOD                      & RW-HAR   & PAMAP2          \\ \midrule
woTemp         & 0.8734                    & 0.8442          & 0.6948          \\
woShared       & 0.9531                    & 0.8771          & 0.8095          \\
woPrivate      & 0.9082                    & 0.9100          & 0.8080          \\
woAugmentation & 0.9538                    & 0.9106          & 0.8163          \\ \midrule
InfoMAE        & \textbf{0.9826} & \textbf{0.9411} & \textbf{0.8478} \\ \bottomrule
\end{tabular}%
}
\end{table}

Although \model is primarily designed for learning settings where the multimodal pairs are scarce, \model demonstrates strong flexibility and generalization as a standard multimodal SSL framework when abundant multimodal pairs are available. 
Figure~\ref{fig:joint_pretrain} presents additional finetuning performance on joint multimodal pretraining.
\model significantly exceeds the MAE-based SSL framework and achieves comparable or superior performance to the SOTA baselines.
It is noteworthy that these baselines are mainly designed for joint multimodal pretraining. InfoMAE is a universal framework for cross-modal alignment that achieves comparable performance as multimodal SSL with few sacrifices.

\subsection{Additional Ablation Studies}
In Table~\ref{tab:ablations}, we present additional ablation accuracy on joint multimodal pretraining, evaluating variants \model when abundant multimodal data is available.
We find the results consistent with the performance presented in Section~\ref{subsec:ablation}
\section{Limitations and Future Work}\label{appendix:limitations}




\noindent \textbf{Pretraining Overhead and Efficiency.} Compared to contrastive SSL (\eg FOCAL, CMC, etc.), InfoMAE incurs additional computational overhead due to its autoencoder architecture and density ratio estimation. While this enhances multimodal alignment, it increases training complexity.
Future work could explore concurrent unimodal pretraining, optimized attention mechanisms like FlashAttention, and alternative density ratio estimation techniques without training discriminators to improve efficiency.

\noindent \textbf{Potential Bias and Robustness Under Sparse Sampling.} InfoMAE demonstrates resilience under sparse multimodal settings (Appendix~\ref{appendix:sparse}). However, we would like to note that distribution-based alignment cannot completely eliminate sampling biases, which can affect learned representations.
Further research is required to develop more robust alignment methods that mitigate sampling errors and improve generalization under extreme data sparsity.

\end{document}